\def\N{m}
\icmltitlerunning{Feature Hashing for Large Scale Multitask Learning}
\begin{document}

\twocolumn[
\icmltitle{Feature Hashing for Large Scale Multitask Learning}

\icmlauthor{}{}
%
\icmlauthor{Kilian Weinberger}{kilian@yahoo-inc.com}
\icmlauthor{Anirban Dasgupta}{anirban@yahoo-inc.com}
\icmlauthor{Josh Attenberg}{josh@cis.poly.edu}
\icmlauthor{John Langford}{jl@hunch.net}
\icmlauthor{Alex Smola}{alex@smola.org}
\icmladdress{Yahoo! Research, 2821 Mission College Blvd.,
   Santa Clara, CA 95051 USA}
%
\icmladdress{{\bf Keywords}: kernels, concentration inequalities, document classification, classifier personalization, multitask learning}
\vskip 0.3in
]

\begin{abstract}
Empirical evidence suggests that hashing is an effective strategy for dimensionality reduction and practical nonparametric estimation. In this paper we provide exponential tail bounds for feature hashing and show that the interaction between random subspaces is negligible with high probability. We demonstrate the feasibility of this approach with experimental results for a new use case --- multitask learning with hundreds of thousands of tasks.
\end{abstract}

\section{Introduction}
\label{sec:intro}

Kernel methods use inner products as the basic tool for comparisons between objects. That is, given objects $x_1,\dots, x_n \in \Xcal$ for some domain $\Xcal$, they rely on
\begin{align}
  \label{eq:kernel}
  k(x_i,x_j) := \inner{\phi(x_i)}{\phi(x_j)}
\end{align}
to compare the features $\phi(x_i)$ of $x_i$ and $\phi(x_j)$ of $x_j$ respectively. 

Eq.~(\ref{eq:kernel}) is often famously referred to as the \emph{kernel-trick}. It allows the use of inner products between very high
dimensional feature vectors $\phi(x_i)$ and $\phi(x_j)$ \emph{implicitly} through the definition of a positive semi-definite kernel matrix $k$
without ever having to compute a vector $\phi(x_i)$ directly. This can be particularly powerful in classification settings where the original
input representation has a non-linear decision boundary. Often, linear separability can be achieved in a  high dimensional feature space $\phi(x_i)$.


In practice, for example in text classification, researchers frequently encounter the opposite problem:  the original input space is almost linearly separable (often because of the existence of handcrafted non-linear features), yet, the training set may be prohibitively large in size and very high dimensional. In such a case, there is no need to map the input vectors into a higher dimensional feature space. Instead, limited memory makes storing a kernel matrix infeasible.

For this common scenario several authors have recently proposed an alternative, but highly complimentary variation of the kernel-trick, which  we refer to as the \emph{hashing-trick}: one \emph{hashes} the high dimensional input vectors $x$ into a \emph{lower} dimensional feature space $\RR^m$ with \mbox{$\phi:\Xcal\rightarrow\RR^m$}~\cite{LaLiSt07,Shietal09}. The parameter vector of a classifier can therefore live in $\RR^m$ instead of in $\RR^n$ with kernel matrices or $\RR^d$ in the original input space, where $m\ll n$ and $m\ll d$. Different from random projections, the hashing-trick preserves sparsity and introduces no additional overhead to store projection matrices.



To our knowledge, we are the first to provide exponential tail bounds on the canonical distortion of these hashed inner products. We also show that the hashing-trick can be particularly powerful in multi-task learning scenarios where the original feature spaces are the cross-product of the data, $\Xcal$, and the set of tasks, $U$. We show that one can use different hash functions for each task $\phi_1,\dots,\phi_{|U|}$ to map the data into one joint space with little interference.

While many potential applications exist for the hashing-trick, as a particular case study we focus on collaborative email spam filtering. In this scenario, hundreds of thousands of users collectively label emails as \emph{spam} or \emph{not-spam}, and each user expects a personalized classifier that reflects their particular preferences. Here, the set of tasks, $U$, is the number of email users (this can be very large for open systems such as \emph{Yahoo Mail}\texttrademark or \emph{Gmail}\texttrademark), and the feature space spans the union of vocabularies in multitudes of languages.

This paper makes four main contributions: 1.~In section~\ref{sec:hashkernels} we introduce specialized hash functions with unbiased inner-products that are directly applicable to a large variety of kernel-methods. 2.~In section~\ref{sec:analysis} we provide exponential tail bounds that help explain why hashed feature vectors have repeatedly lead to, at times surprisingly, strong empirical results. 3.~Also in section~\ref{sec:analysis} we show that the interference between independently hashed subspaces is negligible with high probability, which allows large-scale multi-task learning in a very compressed space. 4.~In section~\ref{sec:results}  we introduce collaborative email-spam filtering as a novel application for hash representations and provide experimental results on large-scale real-world spam data sets.


\section{Hash Functions}
\label{sec:hashkernels}

We introduce a variant on the hash kernel proposed by
\cite{Shietal09}. This scheme is modified through the introduction of a \emph{signed}
sum of hashed features whereas the original hash kernels use an
\emph{unsigned} sum. This modification leads to an unbiased estimate, which we demonstrate and further utilize in the following section.

\begin{definition}
Denote by $h$ a hash function $h:\NN\rightarrow \cbr{1,\dots,\N}$. Moreover, denote by $\xi$ a hash function $\xi:\NN\rightarrow\cbr{\pm 1}$. Then for vectors ${x},{x}'\in \ell_2$ we define the hashed feature map $\phi$ and the corresponding inner product as
  \begin{align}
    \phi^{(h,\xi)}_i(x) & = \hspace{-2mm} \sum_{j: h(j) = i} \hspace{-2mm}
    \xi(i) x_i \\
    \text{ and }
    \inner{x}{x'}_\phi & := \inner{\phi^{(h,\xi)}(x)}{\phi^{(h,\xi)}(x')}.
  \end{align}\label{def:hash}
\end{definition}
\vspace{-3mm}
%
Although the hash functions in definition~\ref{def:hash} are defined over the natural numbers $\NN$, in practice we often consider hash functions over arbitrary strings. These are equivalent, since each finite-length string can be represented by a unique natural number.

Usually, we abbreviate the notation $\phi^{(h,\xi)}(\cdot)$ by just
$\phi(\cdot)$. Two hash functions $\phi$ and $\phi'$ are different
when $\phi = \phi^{(h,\xi)}$ and $\phi'= \phi^{(h',\xi')}$ such that
either $h'\neq h$ or $\xi \neq \xi'$.  The purpose of the binary hash
$\xi$ is to remove the bias inherent in the hash kernel of
\cite{Shietal09}.

In a multi-task setting, we obtain instances in combination with tasks, $(x,u)\in \Xcal\times U$.
We can naturally extend our definition~\ref{def:hash} to hash pairs, and will write $\phi_u(x)=\phi(x,u)$.

\section{Analysis}
\label{sec:analysis}
The following section is dedicated to theoretical analysis of hash
kernels and their applications. In this sense, the present paper continues
 where~\cite{Shietal09} falls short: we prove exponential tail
bounds. These bounds hold for general hash kernels, which we later
apply to show how hashing enables us to do large-scale multitask learning efficiently. We start with a simple lemma about the bias and variance of the hash kernel. The proof of this lemma appears in
appendix~\ref{proof:mean}.

\begin{lemma}
  \label{th:meanvar}
The hash kernel is unbiased, that is $\Eb_\phi[\inner{x}{x'}_\phi] =
\inner{x}{x'}$. Moreover, the variance is
$\sigma^2_{x,x'} = \frac{1}{\N}\left(\sum_{i\neq j} x_i^2 {x_j'}^2 + x_i {x}_i' x_j {x}_j'  \right)$,
and thus, for $\|x\|_2 = \|x'\|_2 = 1$, $\sigma_{x,x'}^2 = O\left(\frac{1}{\N}\right)$.
\end{lemma}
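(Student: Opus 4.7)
The plan is to expand the hashed inner product as a double sum over coordinate pairs and then exploit independence of the two hash functions $h$ and $\xi$, together with the fact that $\xi(j)$ is a Rademacher random variable.

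First, I would rewrite the inner product in a more workable form. Combining the two sums that define $\phi_i(x)$ and $\phi_i(x')$ and swapping order of summation gives
\begin{equation*}
\inner{x}{x'}_\phi \;=\; \sum_{j,k} \mathbf{1}[h(j)=h(k)]\,\xi(j)\xi(k)\,x_j x'_k.
\end{equation*}
This is the only structural manipulation needed; after it, everything reduces to computing expectations of products of $\xi$'s and of indicators on $h$. I would assume (as is standard and implicit in the paper) that $h$ is uniform on $\{1,\dots,m\}$ and independent across coordinates, that $\xi$ is an independent family of uniform $\pm 1$ variables, and that $h$ and $\xi$ are mutually independent.

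For the mean, I would split the sum at $j=k$ versus $j\neq k$. Since $\xi(j)^2=1$ and $\mathbf{1}[h(j)=h(j)]=1$, the diagonal part is exactly $\sum_j x_j x'_j = \inner{x}{x'}$. For $j\neq k$, independence of $\xi(j)$ and $\xi(k)$ together with $\mathbb{E}[\xi(j)]=0$ kills every term, yielding unbiasedness.

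The variance calculation is the main technical step and where the bookkeeping has to be done carefully. I would compute $\mathbb{E}[\inner{x}{x'}_\phi^{\,2}]$ by squaring the double sum and classifying the resulting quadruples $(j,k,j',k')$ according to which patterns of equalities make $\mathbb{E}[\xi(j)\xi(k)\xi(j')\xi(k')]$ nonzero. Because $\xi$ is Rademacher, the expectation is $1$ exactly when every index appears an even number of times, which leaves three configurations: (A) $j=k$ and $j'=k'$; (B) $j=j'$, $k=k'$ with $j\neq k$; and (C) $j=k'$, $k=j'$ with $j\neq k$. Configuration (A) contributes $\inner{x}{x'}^2$ deterministically (the $h$-indicators are $1$), which is exactly what will be subtracted when forming the variance. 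Configurations (B) and (C) each carry a genuine $h$-indicator $\mathbf{1}[h(j)=h(k)]$ with $j\neq k$; by uniformity and pairwise independence of $h$, its expectation is $1/m$. Collecting (B) gives $\frac{1}{m}\sum_{i\neq j} x_i^2 {x'_j}^2$ and collecting (C) gives $\frac{1}{m}\sum_{i\neq j} x_i x'_i x_j x'_j$, matching the stated formula.

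Finally, for the normalized case $\|x\|_2=\|x'\|_2=1$, both off-diagonal sums are bounded by $\|x\|_2^2\|x'\|_2^2=1$ (for the first directly, for the second by Cauchy--Schwarz on $\sum_i x_i x'_i$), so $\sigma^2_{x,x'}=O(1/m)$. The only non-routine part is the case enumeration in the second moment; once the three surviving configurations are identified, the rest is mechanical. I expect a minor subtlety around whether (A) is taken to include the all-equal quadruple $j=k=j'=k'$ (it does, and this is harmless since (B) and (C) explicitly exclude $j=k$), which I would flag explicitly to avoid double-counting.
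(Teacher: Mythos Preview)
Your proposal is correct and follows essentially the same route as the paper: both expand $\inner{x}{x'}_\phi$ as a double sum with a collision indicator, kill the off-diagonal terms in the mean via $\Eb[\xi(j)]=0$, and compute the second moment by classifying quadruples according to which $\xi$-products survive. The paper packages your case enumeration (A)/(B)/(C) into the single identity $\Eb_\xi[\xi(i)\xi(j)\xi(k)\xi(l)] = \delta_{ij}\delta_{kl} + (1-\delta_{ijkl})(\delta_{ik}\delta_{jl} + \delta_{il}\delta_{jk})$, but the content and the resulting bookkeeping are identical.
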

%

This suggests that typical values of the hash kernel should be concentrated within $O(\frac{1}{\sqrt{\N}})$ of the target value. We use Chebyshev's inequality to show that half of all observations are within a range of $\sqrt{2}\sigma$. This, together with an indirect application of Talagrand's convex distance inequality via the result of \cite{liberty2008dfr}, enables us to construct exponential tail bounds.



\subsection{Concentration of Measure Bounds}
In this subsection we show that under a hashed feature-map the length
of each vector is preserved with high probability.
Talagrand's inequality~\cite{Ledoux01} is a key tool for the proof of
the following theorem (detailed in the appendix~\ref{sec:proofs}).
\begin{theorem}
  \label{th:key}
  Let $\epsilon <1$ be a fixed constant and $x$  be a given instance such that $\|x\|_2 = 1$. If $m \geq 72\log(1/\delta)/\epsilon^2$ and $\|x\|_\infty  \leq \frac{\epsilon}{18\sqrt{\log(1/\delta)\log(m/\delta)}}$, we have that
  \begin{align}
\Pr [ | \|x\|^2_\phi - 1 | \geq \epsilon ] \leq 2\delta.
\end{align}
\end{theorem}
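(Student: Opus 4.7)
The plan is to obtain exponential concentration of $\|x\|_\phi^2$ around its mean $1$ in two stages: first locate the median using the variance bound from Lemma~\ref{th:meanvar}, then control fluctuations around the median via a Talagrand-type convex distance inequality as packaged by \cite{liberty2008dfr}. Lemma~\ref{th:meanvar} already gives $\Eb[\|x\|_\phi^2] = 1$ and $\sigma^2 \leq 2/m$, so Chebyshev yields $\Pr[\,|\|x\|_\phi^2 - 1| \geq \sqrt{2}\sigma\,] \leq 1/2$, forcing any median $M$ to satisfy $|M-1|\leq \sqrt{2}\sigma \leq 2/\sqrt{m}$. Under the hypothesis $m \geq 72\log(1/\delta)/\epsilon^2$ this median shift is at most $\epsilon/6$, so it suffices to bound $\Pr[\,|\|x\|_\phi^2 - M| \geq 5\epsilon/6\,]$ by $2\delta$.

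For the concentration around the median, I would view $F(\xi,h) := \|x\|_\phi^2$ as a function of the independent coordinates $\{(h(j),\xi(j))\}_j$ ranging over the support of $x$, and bound its bounded-difference modulus. Writing $F = \sum_i \phi_i(x)^2$, altering a single coordinate $(h(j),\xi(j))$ changes at most two bucket sums $\phi_i(x)$, each by $O(|x_j|)$, so the oscillation of $F$ in coordinate $j$ is $O\bigl(|x_j|\cdot \max_i |\phi_i(x)|\bigr)$. A Hoeffding estimate for each Rademacher sum $\phi_i(x) = \sum_{k: h(k)=i}\xi(k)x_k$, together with a union bound over the $m$ buckets, yields $\max_i |\phi_i(x)| = O\bigl(\|x\|_\infty \sqrt{\log(m/\delta)}\bigr)$ except on an event $\mathcal{E}$ of probability at most $\delta$. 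On the complement of $\mathcal{E}$, Talagrand's convex distance inequality in the form of \cite{liberty2008dfr} delivers $\Pr[\,|F-M|\geq t\,] \leq 2\exp\bigl(-c\,t^2/(\|x\|_\infty^2 \log(m/\delta))\bigr)$. Taking $t = 5\epsilon/6$ and invoking the hypothesis $\|x\|_\infty \leq \epsilon/(18\sqrt{\log(1/\delta)\log(m/\delta)})$ makes the exponent proportional to $\log(1/\delta)$, so the Talagrand tail is $\leq \delta$. A final union bound with $\mathcal{E}$ produces the claimed $2\delta$.

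The main obstacle is the careful Lipschitz bookkeeping under a change in $h(j)$: unlike a pure sign flip, relocating $x_j$ from one bin to another perturbs two buckets nonlinearly, and one must argue that the change in $F$ stays controlled by a cross term involving $\max_i|\phi_i(x)|$ rather than growing quadratically in the bucket sums. This is precisely the regime in which the $\|x\|_\infty \ll 1$ hypothesis pays off, and it is also where the machinery of \cite{liberty2008dfr} carries the load, as it supplies a Talagrand-type tail inequality for functions whose Lipschitz behaviour is itself only high-probability controlled. The remaining pieces (Chebyshev for the median, Hoeffding plus a union bound for the maximum bucket, and a final union at scale $\delta$) are routine, and the two logarithmic factors in the stated constraint on $\|x\|_\infty$ correspond cleanly to the $\log(m/\delta)$ arising from the Hoeffding step and the $\log(1/\delta)$ arising from the Talagrand tail.
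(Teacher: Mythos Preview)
Your Hoeffding step is where the argument breaks. For a fixed bucket $i$, Hoeffding on the Rademacher sum $\phi_i(x)=\sum_{k:h(k)=i}\xi(k)x_k$ gives $|\phi_i(x)|\lesssim \sqrt{\log(m/\delta)}\cdot\bigl(\sum_{k:h(k)=i}x_k^2\bigr)^{1/2}$, not $\|x\|_\infty\sqrt{\log(m/\delta)}$. The bucket mass $\sum_{k:h(k)=i}x_k^2$ can be vastly larger than $\|x\|_\infty^2$: e.g.\ for $x=(d^{-1/2},\dots,d^{-1/2})$ with balanced hashing it is $1/m$, whereas $\|x\|_\infty^2=1/d$. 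With the correct bound $\max_i|\phi_i|\lesssim \sigma_*\sqrt{\log(m/\delta)}$ where $\sigma_*^2:=\max_i\sum_{k:h(k)=i}x_k^2$, your bounded-difference sum of squares becomes $\sigma_*^2\log(m/\delta)$, and the resulting Talagrand tail $\exp(-c\epsilon^2/(\sigma_*^2\log(m/\delta)))$ no longer matches the hypotheses (you lose a $\log(m/\delta)$ factor and need separate control of $\sigma_*$).

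The paper fixes this by decoupling the two sources of randomness rather than applying Talagrand to $(h,\xi)$ jointly. First, a weighted balls-and-bins argument (Bernstein over the randomness in $h$ alone, Lemma~\ref{lem:ballsbins}) shows that under the $\|x\|_\infty$ hypothesis one has $\sigma_*^2\leq 2/m$ with probability $1-\delta$; this is where the $\log(m/\delta)$ factor in the $\|x\|_\infty$ constraint is actually spent. Second, conditionally on $h$, one writes $\|x\|_\phi=\|AD_s x\|_2$ with $A_{ij}=\delta_{i,h(j)}$ and $D_s=\mathrm{diag}(\xi)$ and invokes Lemma~2 of \cite{liberty2008dfr} directly: its hypothesis is on the deterministic spectral quantity $\|x\|_A:=\max_{\|y\|_2=1}\|AD_xy\|_2$, and a Cauchy--Schwarz computation gives $\|x\|_A\leq\sigma_*\leq\sqrt{2/m}\leq \epsilon/(6\sqrt{\log(1/\delta)})$ from $m\geq 72\log(1/\delta)/\epsilon^2$. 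No Chebyshev/median step and no Hoeffding bound on $\max_i|\phi_i|$ are used; the union bound over the two $\delta$-events yields the stated $2\delta$.
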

Note that an analogous result would also hold for the original hash kernel of \cite{Shietal09}, the only modification being the associated bias terms. The above result can also be utilized to show a concentration bound on the inner product between two general vectors $x$ and $x'$.

\begin{corollary}
\label{cor:inner}
For two vectors $x$ and $x'$, let us define
\begin{align*}
\sigma&:=\max(\sigma_{x,x}, \sigma_{x',x'}, \sigma_{x-x',x-x'})\\
\eta&:= \max\left(\frac{\|x\|_\infty}{\|x\|_2},\frac{\|x'\|_\infty}{\|x'\|_2},\frac{\|x-x'\|_\infty}{\|x-x'\|_2} \right).
\end{align*}
 Also let $\Delta = \nbr{x}^2 + \nbr{x'}^2 + \nbr{x - x'}^2$. If  $m \geq \Omega(\frac{1}{\epsilon^2}\log(1/\delta))$ and $\eta = O(\frac{\epsilon}{\log(m/\delta)})$, then we have that
\begin{align*}
&\Pr\sbr{|\inner{x}{x'}_{\phi}\!-\! \inner{x}{x'}|\! >\! \epsilon \Delta/2}\!\nonumber \!<\! \delta.
\end{align*}
\end{corollary}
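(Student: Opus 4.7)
The plan is to reduce the inner-product concentration to three applications of Theorem~\ref{th:key} via the polarization identity. Since the hash map $\phi$ is linear (coordinatewise: $\phi(ax+bx') = a\phi(x) + b\phi(x')$), both the ordinary and hashed inner products satisfy the identity
\[ \inner{x}{x'} = \tfrac{1}{2}\bigl(\|x\|^2 + \|x'\|^2 - \|x-x'\|^2\bigr), \]
and likewise for $\inner{\cdot}{\cdot}_\phi$ with $\|\cdot\|_\phi$. Subtracting the two versions gives
\[ \inner{x}{x'}_\phi - \inner{x}{x'} = \tfrac{1}{2}\bigl[(\|x\|_\phi^2 - \|x\|^2) + (\|x'\|_\phi^2 - \|x'\|^2) - (\|x-x'\|_\phi^2 - \|x-x'\|^2)\bigr], \]
so it suffices to bound each of the three squared-norm discrepancies by $\epsilon$ times the corresponding squared norm.

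To do so, I would apply Theorem~\ref{th:key} to the three unit vectors $x/\|x\|_2$, $x'/\|x'\|_2$, and $(x-x')/\|x-x'\|_2$. By the homogeneity of $\phi$, $\|x\|_\phi^2 = \|x\|_2^2\,\|x/\|x\|_2\|_\phi^2$, so the conclusion of Theorem~\ref{th:key} applied to $y=x/\|x\|_2$ translates to $|\|x\|_\phi^2 - \|x\|^2| \leq \epsilon \|x\|^2$, and analogously for $x'$ and $x-x'$. The infinity-to-two norm ratio of each of the three unit vectors is exactly one of the three quantities inside the max defining $\eta$, so the hypothesis $\|y\|_\infty \leq \epsilon/(18\sqrt{\log(1/\delta')\log(m/\delta')})$ is implied by $\eta = O(\epsilon/\log(m/\delta))$ after choosing $\delta' = \delta/6$ so that a union bound over the three events leaves total failure probability at most $3 \cdot 2\delta' = \delta$. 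This choice of $\delta'$ also turns the sample-size hypothesis $m \geq 72\log(1/\delta')/\epsilon^2$ of Theorem~\ref{th:key} into $m = \Omega(\epsilon^{-2}\log(1/\delta))$, exactly as assumed in the corollary.

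On the good event, substituting the three bounds into the displayed identity yields
\[ |\inner{x}{x'}_\phi - \inner{x}{x'}| \leq \tfrac{\epsilon}{2}\bigl(\|x\|^2 + \|x'\|^2 + \|x-x'\|^2\bigr) = \tfrac{\epsilon}{2}\Delta, \]
which is the stated conclusion.

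There is no serious obstacle: the argument is a direct corollary of Theorem~\ref{th:key} together with polarization, and the only thing requiring care is bookkeeping for the union bound and verifying that the normalized infinity-norm hypothesis for each of $x$, $x'$, $x-x'$ is exactly what the single parameter $\eta$ in the corollary controls. The quantity $\sigma$ defined in the statement does not actually appear in the conclusion and plays no role beyond (implicitly) motivating the $1/m$ scaling via Lemma~\ref{th:meanvar}.
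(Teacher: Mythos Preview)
Your proof is correct and follows essentially the same route as the paper: the polarization identity reduces the inner-product deviation to three squared-norm deviations, each controlled by Theorem~\ref{th:key}, combined via a union bound. You are in fact more careful than the paper's own write-up in explicitly normalizing to unit vectors (required by Theorem~\ref{th:key}) and in adjusting $\delta$ so the final failure probability is $\delta$ rather than $3\delta$; your remark that $\sigma$ plays no role in the argument is also accurate.
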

The proof for this corollary can be found in appendix~\ref{sec:innerproof}. We can also extend the bound in Theorem~\ref{th:key} for the maximal canonical distortion over large sets of distances between
vectors as follows:
\begin{corollary}
  \label{cor:union}
  If $m \geq \Omega(\frac{1}{\epsilon^2}\log(n/\delta))$ and $\eta = O(\frac{\epsilon}{\log(m/\delta)})$. Denote by $X = \cbr{x_1, \ldots, x_n}$ a set of vectors which
  satisfy $\nbr{x_i - x_j}_\infty \leq \eta \nbr{x_i - x_j}_2$ for all
  pairs $i, j$. In this case with probability $1 - \delta$ we have for
  all $i,j$
  \begin{align*}
    \frac{|\nbr{x_i-x_j}_\phi^2 - \nbr{x_i-x_j}_2^2|}{\nbr{x_i-x_j}_2^2}\leq \epsilon.
  \end{align*}
\end{corollary}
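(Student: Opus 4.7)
The plan is to reduce Corollary~\ref{cor:union} to repeated application of Theorem~\ref{th:key} via a union bound over normalized pairwise differences. The key observation is that the hashed feature map $\phi^{(h,\xi)}$ is linear in its argument, so we can pull out the norm of $x_i - x_j$ and work with unit vectors, which is exactly the setting of Theorem~\ref{th:key}.

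First, for each pair $i \ne j$, set $y_{ij} = (x_i - x_j)/\|x_i - x_j\|_2$. Linearity of $\phi$ gives $\phi(y_{ij}) = (\phi(x_i) - \phi(x_j))/\|x_i - x_j\|_2$, so the ratio $\|x_i - x_j\|_\phi^2 / \|x_i - x_j\|_2^2$ equals $\|y_{ij}\|_\phi^2$, and the claimed inequality becomes $|\,\|y_{ij}\|_\phi^2 - 1\,| \le \epsilon$. The vector $y_{ij}$ has unit $\ell_2$ norm, and the hypothesis $\|x_i - x_j\|_\infty \le \eta\|x_i - x_j\|_2$ translates immediately into $\|y_{ij}\|_\infty \le \eta$. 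This is precisely the form needed to invoke Theorem~\ref{th:key}.

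Next, set $\delta' = \delta / n^2$, so that $\log(1/\delta') = O(\log(n/\delta))$ and $\log(m/\delta') = O(\log(mn/\delta)) = O(\log(m/\delta))$ (using $m \ge \log(n/\delta)$ from the hypothesis). Then the hypothesis $m \ge \Omega(\epsilon^{-2}\log(n/\delta))$ is strong enough to imply $m \ge 72\log(1/\delta')/\epsilon^2$, and the hypothesis $\eta = O(\epsilon/\log(m/\delta))$ is stronger than $\eta \le \epsilon / \bigl(18\sqrt{\log(1/\delta')\log(m/\delta')}\bigr)$, since $\sqrt{\log(1/\delta')\log(m/\delta')} = O(\log(m/\delta))$. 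Therefore Theorem~\ref{th:key} applies to each $y_{ij}$ with failure probability at most $2\delta'$.

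Finally, a union bound over the $\binom{n}{2} < n^2/2$ pairs yields total failure probability at most $n^2 \delta' = \delta$, establishing the claim. The argument is mechanical once Theorem~\ref{th:key} is in hand; the only place that requires a little care is the bookkeeping of constants, specifically verifying that the $\log(n/\delta)$ absorbed by the union bound fits inside the Big-$\Omega$ in the condition on $m$ and that the square-root factor from Theorem~\ref{th:key}'s $\|x\|_\infty$ condition is dominated by the $\log(m/\delta)$ already allowed in the hypothesis on $\eta$. No fundamentally new concentration step is needed.
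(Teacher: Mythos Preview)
Your proposal is correct and follows exactly the paper's approach: apply Theorem~\ref{th:key} to each normalized pairwise difference with failure probability $\delta/n^2$, then take a union bound over all pairs. Your write-up is in fact considerably more detailed than the paper's own three-sentence proof, which omits the normalization step and the parameter bookkeeping entirely.
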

This means that the number of observations $n$ (or correspondingly the size of the un-hashed kernel matrix) only enters \emph{logarithmically} in the
analysis.
\begin{proof}
  We apply the bound of Theorem~\ref{th:key} to each distance
  individually. Note that each vector $x_i - x_j$ satisfies the conditions of the theorem, and hence for each vector $x_i - x_j$, we preserve the distance upto a factor of $(1 \pm \epsilon)$ with probability $1- \frac{\delta}{n^2}$. Taking the union bound over all pairs gives us the result.
\end{proof}
\vspace{-3mm}

\subsection{Multiple Hashing}
\label{subsec:multiplehashing}
Note that the tightness of the union bound in Corollary~\ref{cor:union} depends crucially on the
magnitude of $\eta$. In other words, for large values of $\eta$, that
is, whenever some terms in $x$ are very large, even a single collision
can already lead to significant distortions of the embedding. This
issue can be amended by trading off sparsity with variance. A vector of unit
length may be written as $(1, 0, 0, 0, \ldots)$, or as $\left(\frac{1}{\sqrt{2}},
\frac{1}{\sqrt{2}}, 0, \ldots\right)$, or more generally as a vector with $c$
nonzero terms of magnitude $c^{-\frac{1}{2}}$. This is relevant, for
instance whenever the magnitudes of $x$ follow a known pattern,
e.g.\ when representing documents as bags of words since we may simply
hash frequent words several times. The following corollary gives an
intuition as to how the confidence bounds scale in terms of the
replications:
\begin{lemma}
   \label{lem:rep}
  If we let  $x' = \frac{1}{\sqrt{c}}(x, \ldots, x)$ then:
\begin{enumerate}
  \item It is norm preserving: $\nbr{x}_2 = \nbr{x'}_2.$
  \item It reduces component magnitude by $\frac{1}{\sqrt{c}} = \frac{\nbr{x'}_\infty}{\nbr{x}_\infty}.$
  \item Variance increases to $\sigma^2_{x', x'}\! =\! \frac{1}{c}\sigma^2_{x,x}\!+\! \frac{c-1}{c}2\nbr{x}_2^4.$
\end{enumerate}
\end{lemma}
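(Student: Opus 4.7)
The three parts are all direct consequences of elementary calculations on the concatenated vector $x' \in \mathbb{R}^{c\cdot\dim(x)}$ whose coordinates I will index by pairs $(k,i)$ with $k\in\{1,\dots,c\}$ and $i$ ranging over coordinates of $x$, so that $x'_{k,i} = x_i/\sqrt{c}$.

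For part (1), I would simply write $\|x'\|_2^2 = \sum_{k=1}^{c} \sum_i (x_i/\sqrt{c})^2 = c \cdot \tfrac{1}{c}\|x\|_2^2 = \|x\|_2^2$. Part (2) is immediate from the definition: every coordinate of $x'$ equals $1/\sqrt{c}$ times some coordinate of $x$, so $\|x'\|_\infty = \|x\|_\infty/\sqrt{c}$.

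The only substantive step is part (3). The plan is to apply Lemma~\ref{th:meanvar} with $x$ replaced by $x'$ and then expand the double sum in the variance formula in terms of the pair index $(k,i)$. Specialized to a single vector, Lemma~\ref{th:meanvar} gives $\sigma^2_{x',x'} = \tfrac{2}{m}\sum_{(k,i)\neq(l,j)} (x'_{k,i})^2 (x'_{l,j})^2 = \tfrac{2}{mc^2} \sum_{(k,i)\neq(l,j)} x_i^2 x_j^2$. I would then split the sum according to whether $k=l$ or $k\neq l$: the diagonal-in-block part ($k=l$, $i\neq j$) runs over $c$ identical copies of $\sum_{i\neq j}x_i^2 x_j^2$ and hence contributes $\tfrac{1}{c}\sigma^2_{x,x}$, while the off-block part ($k\neq l$, $i,j$ arbitrary) gives $\tfrac{2}{mc^2} \cdot c(c-1) \cdot \|x\|_2^4 = \tfrac{c-1}{c}\cdot\tfrac{2}{m}\|x\|_2^4$. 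Adding these two contributions yields the claimed identity (with the understanding that the second summand carries the usual $1/m$ factor inherited from Lemma~\ref{th:meanvar}).

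There is no real obstacle here; the only thing to get right is the partition of $(k,i)\neq(l,j)$ into the two regimes $k=l$ and $k\neq l$ so that the first regime regenerates exactly the original $\sigma^2_{x,x}$ while the second produces the extra cross-block variance proportional to $\|x\|_2^4$. The monotone behavior in $c$ — smaller infinity norm at the price of a $\|x\|_2^4$ variance floor — is exactly the sparsity/variance tradeoff motivating the lemma.
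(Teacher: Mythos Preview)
The paper does not actually supply a proof of Lemma~\ref{lem:rep}; it is stated and immediately applied. Your argument is correct and is the natural one: parts (1) and (2) are trivial, and for part (3) you specialize the variance formula of Lemma~\ref{th:meanvar} to the self-inner-product case, obtaining $\sigma^2_{x',x'}=\frac{2}{m}\sum_{(k,i)\neq(l,j)} (x'_{k,i})^2(x'_{l,j})^2$, and then split the index set according to whether the block indices agree. The computation is clean and there is no gap.

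Your parenthetical remark about the missing $1/m$ in the second summand is also correct: the calculation gives $\frac{c-1}{c}\cdot\frac{2}{m}\nbr{x}_2^4$, not $\frac{c-1}{c}\cdot 2\nbr{x}_2^4$ as printed. This is a typographical slip in the paper's statement (consistent with Lemma~\ref{th:meanvar}, every variance term must carry the $1/m$), and you have the right expression.
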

Applying Lemma~\ref{lem:rep} to Theorem~\ref{th:key}, a large magnitude can be decreased at
the cost of an increased variance.

\subsection{Approximate Orthogonality}

For multitask learning, we must learn a different parameter vector for
each related task. When mapped into the same hash-feature space
we want to ensure that there is little interaction between the
different parameter vectors. Let $U$ be a set of different tasks,
$u\in U$ being a specific one.  Let $w$ be a combination of the
parameter vectors of tasks in $U\setminus \{u\}$. We show that for any
observation $x$ for task $u$, the interaction of $w$ with $x$ in the
hashed feature space is minimal.  For each $x$, let the image of $x$
under the hash feature-map for task $u$ be denoted as $\phi_u(x)=\phi^{(\xi, h)}((x,u))$.
%

\begin{theorem}
  \label{th:ortho}
  Let $w \in \RR^\N$ be a parameter vector for tasks in $U\setminus \{u\}$. In this case the value of the inner product
  $\inner{w}{\phi_u(x)}$ is
  bounded by
  \begin{align}
    \nonumber
    \Pr\cbr{\abr{\inner{w}{\phi_u(x)}} > \epsilon}
     \leq 2 e^{
      -\frac{\epsilon^2/2}{\N^{-1} \nbr{w}_2^2 \nbr{x}_2^2 + \epsilon
        \nbr{w}_\infty \nbr{x}_\infty/3}}
  \end{align}
\end{theorem}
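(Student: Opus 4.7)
The plan is to recognize the left-hand side as a sum of independent, bounded, zero-mean random variables and then apply Bernstein's inequality, since the stated bound has the characteristic Bernstein form with a variance term $m^{-1}\|w\|_2^2\|x\|_2^2$ and a boundedness term $\|w\|_\infty\|x\|_\infty/3$.

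First I would expand the inner product. By definition, $\phi_u(x)_i = \sum_{j : h_u(j)=i} \xi_u(j)\,x_j$, so
\begin{align*}
\inner{w}{\phi_u(x)} \;=\; \sum_i w_i \sum_{j:h_u(j)=i} \xi_u(j)\,x_j \;=\; \sum_j \xi_u(j)\, x_j\, w_{h_u(j)}.
\end{align*}
Set $Z_j := \xi_u(j)\, x_j\, w_{h_u(j)}$. Because $w$ is built from the parameter vectors for tasks in $U\setminus\{u\}$, it is independent of both $h_u$ and $\xi_u$. Moreover, $\{\xi_u(j), h_u(j)\}_j$ are independent across $j$, so the $Z_j$ are mutually independent.

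Next I would verify the three ingredients Bernstein needs. (i) Mean zero: conditioning on $h_u(j)$, $\xi_u(j)$ is Rademacher and independent of $w_{h_u(j)}$, so $\E[Z_j]=0$. (ii) Boundedness: $|Z_j|\le \|x\|_\infty \|w\|_\infty$ deterministically. (iii) Variance: since $h_u(j)$ is uniform on $\{1,\dots,m\}$ and independent of $w$,
\begin{align*}
\E[Z_j^2] \;=\; x_j^2\,\E\bigl[w_{h_u(j)}^2\bigr] \;=\; \frac{x_j^2\,\|w\|_2^2}{m},
\end{align*}
and therefore $\sum_j \E[Z_j^2] = \|x\|_2^2\|w\|_2^2/m$.

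Finally, I would plug these into the standard Bernstein bound
\begin{align*}
\Pr\!\left[\Bigl|\sum_j Z_j\Bigr| > \epsilon\right] \;\le\; 2\exp\!\left(-\frac{\epsilon^2/2}{\sum_j \E[Z_j^2] + M\epsilon/3}\right),
\end{align*}
with $M = \|x\|_\infty\|w\|_\infty$, which yields exactly the claimed inequality. The proof is essentially a careful setup; there is no significant analytic obstacle once the expansion $\sum_j \xi_u(j) x_j w_{h_u(j)}$ is in hand. The only subtle point worth flagging is the independence of $w$ from $(h_u,\xi_u)$, which relies on the modeling convention that distinct tasks use independent hash functions; without this, the variance computation in step (iii) would not decouple cleanly.
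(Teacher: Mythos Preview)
Your proof is correct and follows essentially the same route as the paper: expand $\inner{w}{\phi_u(x)}=\sum_j \xi_u(j)\,x_j\,w_{h_u(j)}$, verify mean zero, the bound $|Z_j|\le \|x\|_\infty\|w\|_\infty$, and the variance sum $m^{-1}\|x\|_2^2\|w\|_2^2$, then apply Bernstein's inequality. If anything, your treatment of the independence of $w$ from $(h_u,\xi_u)$ is slightly more explicit than the paper's.
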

\begin{proof}
  We use Bernstein's inequality~\cite{Bernstein46}, which states that for independent random variables $X_j$, with $\Eb\sbr{X_j} = 0$, if $C>0$ is such that $|X_j|\leq C$, then
\begin{equation}
	\Pr\sbr{\sum_{j=1}^n\! X_j\!>\!t}\!\leq\! \exp{\!\left(\!-\frac{t^2/2}{\sum_{j=1}^n\mathbf{E}\sbr{X_j^2}+Ct/3}\!\right)}.	\label{def:Bernstein}
\end{equation}


We have to compute the concentration property of \\$\inner{w}{\phi_u(x)}=\sum_j x_j \xi(j) w_{h(j)}$.  Let $X_j = x_j \xi(j)w_{h(j)}$. By the definition of $h$ and $\xi$, $X_j$ are independent. Also, for each $j$, since $w$ depends only on the hash-functions for $U\setminus\{u\}$, $w_h(j)$ is independent of $\xi(j)$. Thus, $\Eb[X_j] = \Eb_{(\xi, h)}\sbr{x_j \xi(j) w_{h(j)}} = 0$. For each $j$, we also have $|X_j| < \nbr{x}_\infty
  \nbr{w}_\infty=:C$. Finally, $\sum_j \Eb[X_j^2]$ is given by
  \begin{align*}
    \Eb\sbr{\sum_j (x_j \xi(j) w_{h(j)})^2} = {\textstyle \frac{1}{\N}}
    \sum_{j,\ell} x_j^2 w_\ell^2 = {\textstyle \frac{1}{\N}} \nbr{x}_2^2 \nbr{w}_2^2
  \end{align*}
  The claim follows by plugging both terms and $C$ into the Bernstein inequality~(\ref{def:Bernstein}).
\end{proof}

Theorem~\ref{th:ortho} bounds the influence of unrelated tasks with any particular instance.
In section~\ref{sec:results} we demonstrate the real-world applicability with empirical results on a large-scale multi-task learning problem.

\section{Applications}

The advantage of feature hashing is that it allows for significant
storage compression for parameter vectors: storing
$w$ in the raw feature space naively requires $O(d)$ numbers, when $w \in
\RR^d$. By hashing, we are able to reduce this to $O(\N)$ numbers while
avoiding costly matrix-vector multiplications common in
Locally Sensitive Hashing. In addition, the sparsity of the resulting vector is preserved.

The benefits of the hashing-trick leads to applications in almost all areas of machine learning and beyond. In particular, feature hashing is extremely useful whenever large numbers of parameters with redundancies need to be stored within bounded memory capacity.

\paragraph{Personalization}
\label{par:personalization}
One powerful application of feature hashing is found in multitask learning. Theorem~\ref{th:ortho} allows us to hash multiple classifiers for different tasks into one feature space with little interaction.
%
To illustrate, we explore this setting in the context of spam-classifier personalization.

Suppose we have thousands of users $U$ and want to perform related but not identical classification tasks for each of the them. 
Users provide labeled data by marking emails as
\emph{spam} or \emph{not-spam}. Ideally, for each user $u\in U$, we
want to learn a predictor $w_u$ based on the data of that user
solely. However, webmail users are notoriously lazy in labeling emails
and even those that do not contribute to the training data
expect a working spam filter. Therefore, we also need to learn an
additional global predictor $w_0$ to allow data sharing amongst all
users.

Storing all predictors $w_i$ requires $O(d\times (|U|+1))$ memory. In a task like collaborative spam-filtering, $|U|$, the number of users can be in the hundreds of thousands and the size of the vocabulary is usually in the order of millions. The naive way of dealing with this is to eliminate all infrequent tokens. However, spammers target this memory-vulnerability by maliciously misspelling words and thereby creating highly infrequent but spam-typical tokens that ``fall under the radar'' of conventional classifiers. Instead, if all words are hashed into a finite-sized feature vector, infrequent but class-indicative tokens get a chance to contribute to the classification outcome. Further, large scale spam-filters (e.g. \emph{Yahoo Mail}\texttrademark or \emph{GMail}\texttrademark) typically have severe memory and time constraints, since they have to handle billions of emails per day. To guarantee a finite-size memory footprint we hash all weight vectors $w_0,\dots,w_{|U|}$ into a joint, significantly smaller, feature space $\RR^m$ with different hash functions $\phi_0,\dots,\phi_{|U|}$. The resulting hashed-weight vector $w_h\in\RR^m$ can then be written as:
\begin{equation}
	w_h = \phi_0 (w_0) + \sum_{u\in U} \phi_u(w_u).\label{eq:hybriddef}
\end{equation}
Note that in practice the weight vector $w_h$ can be learned directly in the hashed space. All un-hashed weight vectors never need to be computed. Given a new document/email $x$ of user $u\in U$, the prediction task now consists of calculating $\inner{\phi_0(x) + \phi_u(x)}{w_h}$. Due to hashing we have two sources of error -- distortion $\epsilon_d$ of the hashed inner products and the interference with other hashed weight vectors $\epsilon_i$. More precisely:
\begin{equation}
\inner{\phi_0(x)+\phi_u(x)}{w_h}=\inner{x}{w_0+w_u}+\epsilon_d+\epsilon_i.
\end{equation}



The interference error consists of all collisions between $\phi_0(x)$ or $\phi_u(x)$ with hash functions of other users,
\begin{equation}
\epsilon_i\!=\hspace{-2ex}\sum_{v\in U, v\neq 0}\hspace{-2ex}\inner{\phi_0(x)}{\phi_v(w_v)}+\hspace{-2ex}\sum_{v\in U, v\neq u}\hspace{-2ex}\inner{\phi_u(x)}{\phi_v(w_v)}.\label{def:epsiloni}
\end{equation}
To show that $\epsilon_i$ is small with high probability we can apply Theorem~\ref{th:ortho} twice, once for each term of (\ref{def:epsiloni}). We consider each user's classification to be a separate task, and since $\sum_{v\in U, v\neq 0}w_v$ is independent of the hash-function $\phi_0$, the conditions of Theorem~\ref{th:ortho} apply with $w = \sum_{v\neq 0} w_v$  and we can employ it to bound the second term, $\sum_{v\in U, v\neq 0}\inner{\phi_u(x)}{\phi_u(w_v)}$. The second application is identical except that all  subscripts  ``0'' are substituted with ``u''. For lack of space we do not derive the exact bounds.

The distortion error occurs because each hash function that is utilized by user $u$ can self-collide:
\begin{equation}
 \epsilon_d=\!\sum_{v\in\{u,0\}}\! |\inner{\phi_v(x)}{\phi_v(w_v)}-\inner{x}{w_v}|.
\end{equation}
To show that $\epsilon_d$ is small with high probability, we apply Corollary~\ref{cor:inner} once for each possible values of $v$.

In section~\ref{sec:results} we show experimental results for this
setting.  The empirical results are stronger than the theoretical
bounds derived in this subsection---our technique outperforms a single
global classifier on hundreds thousands of users.  We discuss
an intuitive explanation in section~\ref{sec:results}.

\paragraph{Massively Multiclass Estimation}
We can also regard massively multi-class classification as a multitask problem, and apply feature hashing in a way similar to the personalization setting. Instead of using a different hash function for each user, we use a different hash function for each class.  

\cite{Shietal09} apply feature hashing to problems with a high number
of categories. They show empirically that \emph{joint} hashing of the
feature vector $\phi(x,y)$ can be efficiently achieved for problems
with millions of features and thousands of classes.
%

\paragraph{Collaborative Filtering}

Assume that we are given a very large sparse matrix $M$ where the entry $M_{ij}$ indicates what action user $i$ took on instance $j$. A common example for actions and instances is
user-ratings of movies~\cite{netflix}. A successful method for finding common factors amongst users and instances for predicting unobserved actions is to factorize
$M$ into $M=U^\top W$. If we have millions of users performing millions of actions, storing $U$ and $W$ in memory quickly becomes infeasible.
Instead, we may choose to compress the matrices $U$ and $W$ using hashing. For $U, W \in \RR^{n \times d}$ denote by $u, w \in \RR^\N$ vectors with
\begin{align}
  \nonumber
  u_{i} = \hspace{-2ex} \sum_{j,k : h(j,k) = i} \hspace{-2ex} \xi(j,k) U_{jk}
  \text{ and }
  w_{i} = \hspace{-2ex} \sum_{j,k : h'(j,k) = i} \hspace{-2ex} \xi'(j,k) W_{jk}.
\end{align}
where $(h,\xi)$ and $(h',\xi')$ are independently chosen hash
functions.  This allows us to approximate matrix elements $M_{ij} =
[U^\top W]_{ij}$ via
\begin{align*}
  M^\phi_{ij} := \sum_k \xi(k,i) \xi'(k,j) u_{h(k,i)} w_{h'(k,j)}.
\end{align*}
This gives a compressed vector representation of $M$ that can be efficiently stored.

\section{Results}
\label{sec:results}

We evaluated our algorithm in the setting of personalization. As data set, we used a proprietary email spam-classification task of $n=3.2$ million emails, properly anonymized, collected from $|U|=433167$ users. Each email is labeled as \emph{spam} or \emph{not-spam} by one user in $U$. After tokenization, the data set consists of $40$ million unique words.

\begin{figure}[t!]
  \centering
  \includegraphics[width=0.5\textwidth]{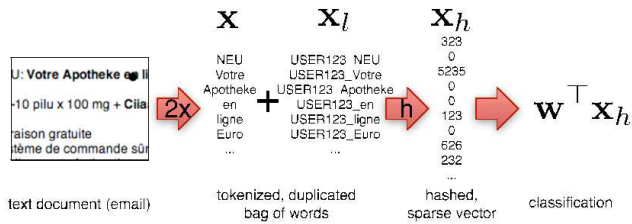}
  \caption{The hashed personalization summarized in a schematic
    layout. Each token is duplicated and one copy is individualized (e.g. by concatenating each word with a unique user identifier). Then, the global hash function maps all tokens  into a low dimensional feature space where the document is classified.}
  \label{fig:schema}
\end{figure}

For all experiments in this paper, we used the Vowpal Wabbit implementation\footnote{{http://hunch.net/$\sim$vw/}} of stochastic gradient descent on a square-loss.
In the mail-spam literature the misclassification of \emph{not-spam} is considered to be much more harmful than misclassification of \emph{spam}. 
We therefore follow the convention to set the classification threshold during test time such that exactly $1\%$ of the $not-spam$ test data is classified as $spam$
%
Our implementation of the  personalized hash functions is illustrated in Figure~\ref{fig:schema}. To obtain a personalized hash function $\phi_u$ for user $u$, we concatenate a unique user-id to each word in the email and then hash the newly generated tokens with the same global hash function.
%

The data set was collected over a span of 14 days. We used the first 10 days for training and the remaining 4 days for testing. As \emph{baseline}, we chose the purely global classifier trained over all users and hashed into $2^{26}$ dimensional space. As $2^{26}$ far exceeds the total number of unique words we can regard the baseline to be representative for the classification without hashing. All results are reported as the amount of spam that passed the filter undetected, relative to this baseline (eg. a value of $0.80$ indicates a $20\%$ reduction in spam for the user)\footnote{As part of our data sharing agreement, we agreed not to include absolute classification error-rates.}.
%

Figure~\ref{fig:global_results} displays the average amount of spam in users' inboxes as a function of the number of hash keys $m$, relative to the baseline above. In addition to the baseline, we evaluate two different settings.

\begin{figure}[t]
  \centering
  \includegraphics[width=\columnwidth]{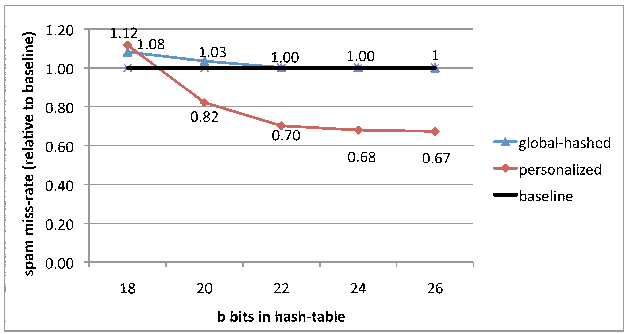}
  \caption{The decrease of uncaught spam over the baseline classifier averaged over all users. The classification threshold was chosen to keep the not-spam misclassification fixed at $1\%$. The hashed global classifier (\emph{global-hashed}) converges relatively soon, showing that the distortion error $\epsilon_d$ vanishes. The personalized classifier results in an average improvement of up to $30\%$. }
  \label{fig:global_results}
\end{figure}
The \emph{global-hashed} curve represents the relative spam catch-rate of the global classifier after hashing $\inner{\phi_0(w_0)}{\phi_0(x)}$. At $m=2^{26}$ this is identical to the baseline. Early convergence at $m=2^{22}$ suggests that at this point hash collisions have no impact on the classification error and the {\em baseline} is indeed equivalent to that obtainable without hashing.

In the \emph{personalized} setting each user $u\in U$ gets her own
classifier $\phi_u(w_u)$ as well as the global classifier
$\phi_0(w_0)$.  Without hashing the feature space explodes, as the
cross product of $u=400K$ users and $n=40M$ tokens results in $16$
trillion possible unique personalized features. Figure~\ref{fig:global_results}
shows that despite aggressive hashing, personalization results in a
$30\%$ spam reduction once the hash table is indexed by $22$ bits.

\paragraph{User clustering}
One hypothesis for the strong results in Figure~\ref{fig:global_results} might originate from the non-uniform distribution of user votes --- it is possible that using personalization and feature hashing we benefit a small number of users who have labeled many emails, degrading the performance of most users (who have labeled few or no emails) in the process. In fact, in real life, a large fraction of email users do not
contribute at all to the training corpus and only interact with the
classifier during test time. The personalized version of the test
email $\Phi_u(x_u)$ is then hashed into buckets of other tokens and only
adds interference noise $\epsilon_i$ to the classification.

\begin{figure}[t!]
  \centering
  \includegraphics[width=\columnwidth]{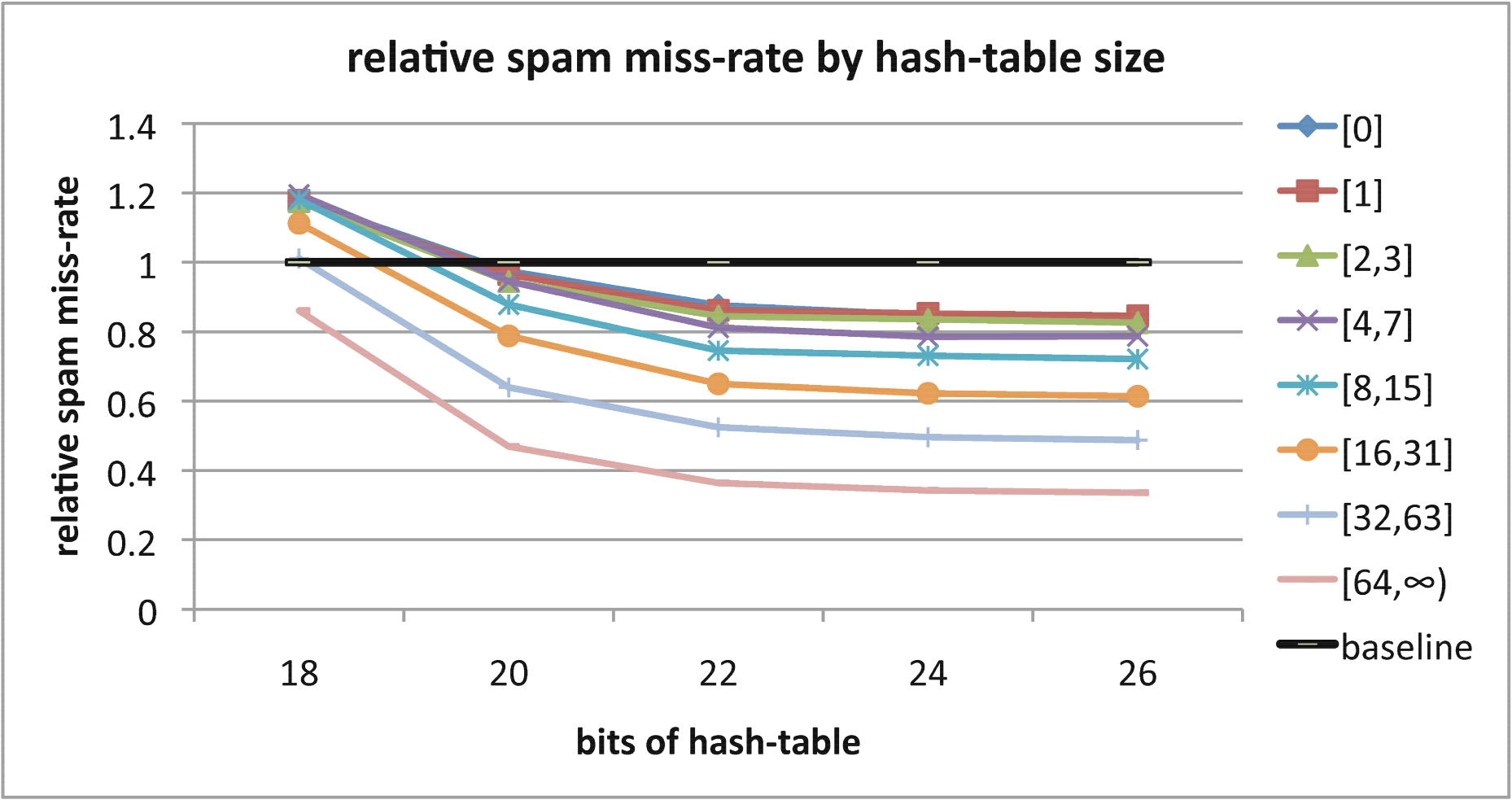}
  \caption{Results for users clustered by training emails. For
    example, the bucket $[8,15]$ consists of all users with eight to
    fifteen training emails. Although users in buckets with large
    amounts of training data do benefit more from the personalized
    classifier (up-to $65\%$ reduction in spam), even users that did
    not contribute to the training corpus at all obtain almost $20\%$
    spam-reduction.}
  \label{fig:figures_results}
\end{figure}
In order to show that we improve the performance of most users, it is therefore important that we not
 only report averaged results over all emails, but explicitly examine the effects of the
personalized classifier for users depending on their contribution to
the training set. To this end, we place users into exponentially growing buckets
based on their number of training emails and compute the relative
reduction of uncaught spam for each bucket
individually. Figure~\ref{fig:figures_results} shows the results on a
per-bucket basis.  We do not compare against a \emph{purely} local
approach, with no global component, since for a large fraction of
users---those without training data---this approach cannot outperform
random guessing.

It might appear rather surprising that users in the bucket with none
or very little training emails (the line of bucket $[0]$ is identical
to bucket $[1]$) also benefit from personalization. After all, their
personalized classifier was never trained and can only add noise at
test-time. The classifier improvement of this bucket can be explained
by the subjective definition of \emph{spam} and \emph{not-spam}. In
the personalized setting the individual component of user labeling is
absorbed by the local classifiers and the global classifier represents
the \emph{common} definition of spam and not-spam. In other words, the
global part of the personalized classifier obtains better
generalization properties, benefiting all users.

\section{Related Work}

A number of researchers have tackled related, albeit different
problems.

{\bfseries \cite{RahRec08}} use Bochner's theorem and
sampling to obtain approximate inner products for Radial Basis
Function kernels. \cite{RahRec09} extend this to sparse
approximation of weighted combinations of basis
functions. This is computationally efficient for many function
spaces. Note that the representation is
\emph{dense}.

{\bfseries \cite{LiChuHas07}} take a complementary approach: for
sparse feature vectors, $\phi(x)$, they devise a scheme of reducing the
number of nonzero terms even further. While this is in principle
desirable, it does not resolve the problem of $\phi(x)$ being high
dimensional. More succinctly, it is necessary to express the function in the
dual representation rather than expressing $f$ as a linear function,
where $w$ is unlikely to be compactly represented: $f(x) = \inner{\phi(x)}{w}$.

{\bfseries \cite{Achlioptas03}} provides computationally efficient
randomization schemes for dimensionality reduction. Instead of
performing a dense $d \cdot \N$ dimensional matrix vector
multiplication to reduce the dimensionality for a vector of
dimensionality $d$ to one of dimensionality $\N$, as is required by
the algorithm of \cite{GioIndMot99}, he only requires $\frac{1}{3}$ of
that computation by designing a matrix consisting only of entries
$\cbr{-1, 0, 1}$. Pioneered by {\bfseries \cite{ailon2006ann}}, there
has been a line of work~\cite{ailon2008fdr,matousek2008vjl} on
improving the complexity of random projection by using various
code-matrices in order to preprocess the input vectors. Some of our
theoretical bounds are derivable from that of {\bfseries
  ~\cite{liberty2008dfr}}.

A related construction is the CountMin sketch of
{\bfseries \cite{CorMut04}} which stores counts in a number of replicates of a
hash table. This leads to good concentration inequalities for range
and point queries.

{\bfseries \cite{Shietal09}} propose a hash kernel to deal with the
issue of computational efficiency by a very simple algorithm:
high-dimensional vectors are compressed by adding up all coordinates
which have the same hash value --- one only needs to perform as many
calculations as there are nonzero terms in the vector. This is a
significant computational saving over locality sensitive hashing
\cite{Achlioptas03,GioIndMot99}.

Several additional works provide motivation for the investigation of
hashing representations.  For example, {\bfseries
  \cite{GanchevDredze08}} provide empirical evidence that the hashing
trick can be used to effectively reduce the memory footprint on
many sparse learning problems by an order of magnitude via removal of
the dictionary.  Our experimental results validate this, and show that
much more radical compression levels are achievable.  In addition,
{\bfseries \cite{LaLiSt07}} released the Vowpal Wabbit fast online
learning software which uses a hash representation similar to that
discussed here.


\section{Conclusion}
\label{sec:conclusion}
In this paper we analyze the hashing-trick for dimensionality
reduction  theoretically and  empirically. As part of our theoretical analysis we introduce unbiased hash functions and  provide exponential tail bounds for hash kernels. These give further inside into hash-spaces and explain previously made empirical observations. We also derive that random subspaces of the hashed space are likely to not interact, which makes multitask learning with many tasks possible.

Our empirical results validate this on a real-world application within
the context of spam filtering. Here we demonstrate that
even with a very large number of tasks and features, all mapped into
a joint lower dimensional hash-space, one can obtain impressive classification results with finite memory guarantee.

\bibliography{bibfile}

\begin{thebibliography}{}

\bibitem[Achlioptas, 2003][Achlioptas][2003]{Achlioptas03}
Achlioptas, D. (2003).
\newblock Database-friendly random projections: Johnson-lindenstrauss with
  binary coins.
\newblock {\em Journal of Computer and System Sciences}, {\em 66}, 671--687.

\bibitem[Ailon \& Chazelle, 2006][Ailon and Chazelle][2006]{ailon2006ann}
Ailon, N., \& Chazelle, B. (2006).
\newblock {Approximate nearest neighbors and the fast Johnson-Lindenstrauss
  transform}.
\newblock {\em Proc. 38th Annual ACM Symposium on Theory of Computing} (pp.\/
  557--563).

\bibitem[Ailon \& Liberty, 2008][Ailon and Liberty][2008]{ailon2008fdr}
Ailon, N., \& Liberty, E. (2008).
\newblock {Fast dimension reduction using Rademacher series on dual BCH codes}.
\newblock {\em Proc. 19th Annual ACM-SIAM Symposium on Discrete algorithms}
  (pp.\/ 1--9).

\bibitem[Alon, 2003][Alon][2003]{alon2003par}
Alon, N. (2003).
\newblock {Problems and results in extremal combinatorics, Part I}.
\newblock {\em Discrete Math}, {\em 273}, 31--53.

\bibitem[Bennett \& Lanning, ][Bennett and Lanning][]{netflix}
Bennett, J., \& Lanning, S.
\newblock {The Netflix Prize}.
\newblock {\em Proceedings of KDD Cup and Workshop 2007}.

\bibitem[Bernstein, 1946][Bernstein][1946]{Bernstein46}
Bernstein, S. (1946).
\newblock {\em The theory of probabilities}.
\newblock Moscow: Gastehizdat Publishing House.

\bibitem[Cormode \& Muthukrishnan, 2004][Cormode and
  Muthukrishnan][2004]{CorMut04}
Cormode, G., \& Muthukrishnan, M. (2004).
\newblock An improved data stream summary: The count-min sketch and its
  applications.
\newblock {\em LATIN: Latin American Symposium on Theoretical Informatics}.

\bibitem[Dasgupta et~al.\/, 2010][Dasgupta et~al.\/][2010]{dsr2010sparse}
Dasgupta, A., Sarlos, T., \& Kumar, R. (2010).
\newblock A {S}parse {J}ohnson {L}indenstrauss {T}ransform.
\newblock {\em Submitted}.

\bibitem[Ganchev \& Dredze, 2008][Ganchev and Dredze][2008]{GanchevDredze08}
Ganchev, K., \& Dredze, M. (2008).
\newblock {Small statistical models by random feature mixing}.
\newblock {\em Workshop on Mobile Language Processing, Annual Meeting of the
  Association for Computational Linguistics}.

\bibitem[Gionis et~al.\/, 1999][Gionis et~al.\/][1999]{GioIndMot99}
Gionis, A., Indyk, P., \& Motwani, R. (1999).
\newblock Similarity search in high dimensions via hashing.
\newblock {\em Proceedings of the 25th {VLDB} Conference} (pp.\/ 518--529).
\newblock Edinburgh, Scotland: Morgan Kaufmann.

\bibitem[Langford et~al.\/, 2007][Langford et~al.\/][2007]{LaLiSt07}
Langford, J., Li, L., \& Strehl, A. (2007).
\newblock {\em Vowpal wabbit online learning project} (Technical Report).
\newblock \texttt{http://hunch.net/?p=309}.

\bibitem[Ledoux, 2001][Ledoux][2001]{Ledoux01}
Ledoux, M. (2001).
\newblock {\em The concentration of measure phenomenon}.
\newblock Providence, RI: AMS.

\bibitem[Li et~al.\/, 2007][Li et~al.\/][2007]{LiChuHas07}
Li, P., Church, K., \& Hastie, T. (2007).
\newblock Conditional random sampling: A sketch-based sampling technique for
  sparse data.
\newblock In B.~Sch\"{o}lkopf, J.~Platt and T.~Hoffman (Eds.), {\em Advances in
  neural information processing systems 19},  873--880. Cambridge, MA: MIT
  Press.

\bibitem[Liberty et~al.\/, 2008][Liberty et~al.\/][2008]{liberty2008dfr}
Liberty, E., Ailon, N., \& Singer, A. (2008).
\newblock Dense fast random projections and lean {W}alsh transforms.
\newblock {\em Proc. 12th International Workshop on Randomization and
  Approximation Techniques in Computer Science} (pp.\/ 512--522).

\bibitem[Matousek, 2008][Matousek][2008]{matousek2008vjl}
Matousek, J. (2008).
\newblock {On variants of the Johnson--Lindenstrauss lemma}.
\newblock {\em Random Structures and Algorithms}, {\em 33}, 142--156.

\bibitem[Rahimi \& Recht, 2008][Rahimi and Recht][2008]{RahRec08}
Rahimi, A., \& Recht, B. (2008).
\newblock Random features for large-scale kernel machines.
\newblock In J.~Platt, D.~Koller, Y.~Singer and S.~Roweis (Eds.), {\em Advances
  in neural information processing systems 20}. Cambridge, MA: MIT Press.

\bibitem[Rahimi \& Recht, 2009][Rahimi and Recht][2009]{RahRec09}
Rahimi, A., \& Recht, B. (2009).
\newblock Randomized kitchen sinks.
\newblock In L.~Bottou, Y.~Bengio, D.~Schuurmans and D.~Koller (Eds.), {\em
  Advances in neural information processing systems 21}. Cambridge, MA: MIT
  Press.

\bibitem[Shi et~al.\/, 2009][Shi et~al.\/][2009]{Shietal09}
Shi, Q., Petterson, J., Dror, G., Langford, J., Smola, A., Strehl, A., \&
  Vishwanathan, V. (2009).
\newblock Hash kernels.
\newblock {\em {AISTATS} 12}.

\bibitem[Weinberger et~al.\/, 2009][Weinberger
  et~al.\/][2009]{weinberger2009fhl}
Weinberger, K., Dasgupta, A., Attenberg, J., Langford, J., \& Smola, A. (2009).
\newblock {Feature hashing for large scale multitask learning}.
\newblock {\em 26th International Conference on Machine Learning} (p.\/ 140).

\end{thebibliography}
\bibliographystyle{mlapa}
\begin{appendix}

\section{Mean and Variance}

\label{proof:mean}
\begin{proof} [Lemma~\ref{th:meanvar}]
  To compute the expectation we expand
\begin{equation}
    \inner{x}{x'}_\phi = \sum_{i,j} \xi(i) \xi(j) x_i x_j' \delta_{h(i), h(j)}.	
\end{equation}
Since $\Eb_\phi[\inner{x}{x'}_\phi] = \Eb_h[ \Eb_\xi[\inner{x}{x'}_\phi ]]$, taking expectations over $\xi$ we see that only the terms $i = j$ have
  nonzero value, which shows the first claim. For the variance we compute $\Eb_\phi[\inner{x}{x'}_\phi^2]$.  Expanding this, we get:
\begin{equation*}
	    \inner{x}{x'}^2_\phi = \sum_{i,j,k,l} \xi(i) \xi(j) \xi(k) \xi(l) x_i {x}_j' x_k {x}_l' \delta_{h(i), h(j)}	\delta_{h(k), h(l)}.
\end{equation*}
This expression can be simplified by noting that:
  \begin{align*}
    \Eb_\xi \sbr{\xi(i) \xi(j) \xi(k) \xi(l) } =
    \delta_{ij} \delta_{kl} + [1 - \delta_{ijkl}] ( \delta_{ik} \delta_{jl} + \delta_{il}\delta_{jk}).
  \end{align*}
Passing the expectation over $\xi$ through the sum, this allows us to
break down the expansion of the variance into two terms.
  \begin{align*}
    & \Eb_{\phi}[\inner{x}{x'}_\phi^2] = \sum_{i,k} x_i x_i' x_k x_k'
    + \sum_{i \neq j} x_i^2 {x_j'}^2 \Eb_h\sbr{\delta_{h(i),h(j)}} \\& + \sum_{i \neq j} x_i x_i' x_j x_j' \Eb_h\sbr{\delta_{h(i),h(j)}}\\
    & = \inner{x}{x'}^2 + \frac{1}{\N}\left( \sum_{i \neq j} x_i^2 {x_j'}^2 + \sum_{i \neq j} x_i x_i' x_j x_j'\right)
  \end{align*}
  by noting that $\Eb_h\sbr{\delta_{h(i),h(j)}} = \frac{1}{\N}$ for $i\neq j$. Using the fact that $\sigma^2 = \Eb_\phi[\inner{x}{x'}_\phi^2] -
  \Eb_\phi[\inner{x}{x'}_\phi]^2$ proves the claim.
\end{proof}

\section{Concentration of Measure}
\label{sec:proofs}
We use the concentration result derived by Liberty, Ailon and Singer in~\cite{liberty2008dfr}. Liberty et al. create a Johnson-Lindenstrauss random projection matrix by combining a carefully constructed deterministic matrix $A$ with random diagonal matrices. For completeness we restate the relevant lemma. Let $i$ range over the hash-buckets. Let $m = c\log(1/\delta)/\epsilon^2$ for a large enough constant $c$. For a given vector $x$, define the diagonal matrix $D_x$ as $(D_x)_{jj} = x_j$. For any matrix $A\in \Re^{m\times d}$, define $\|x\|_A \equiv \max_{y:\|y\|_2=1}\|AD_xy\|_2$.
\paragraph{Lemma 2~\cite{liberty2008dfr}.} {\em For any column-normalized matrix $A$, vector $x$ with $\|x\|_2=1$ and an i.i.d. random $\pm 1$ diagonal matrix  $D_s$, the following holds:} $\forall x, \mbox{ if }\|x\|_A \leq \frac{\epsilon}{6\sqrt{\log(1/\delta)}}\mbox{ then, } \Pr[| \|AD_s x\|_2 - 1 | > \epsilon] \leq \delta.$

\par We also need the following form of a weighted balls and bins inequality -- the statement of the Lemma, as well as the proof follows that of Lemma 6~\cite{dsr2010sparse}. We still outline the proof because of some parameter values being different.
\begin{lemma}
\label{lem:ballsbins}
Let $m$ be the size of the hash function range and let $\eta = \frac{1}{2\sqrt{m\log(m/\delta)}}$. If $x$ is such that $\|x\|_2=1$ and $\|x\|_\infty \leq \eta$, then define $\sigma^2_* = \max_i \sum_{j=1}^d x_j^2 \delta_{ih(j)}$ where $i$ ranges over
all hash-buckets. We have that with probability $1 - \delta$,
\begin{align*}
\sigma_*^2 \leq \frac{2}{m}
\end{align*}
\end{lemma}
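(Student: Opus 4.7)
The plan is to prove the bound for a single bucket by Bernstein's inequality and then take a union bound over all $m$ buckets. Fix a bucket index $i$ and write
\[
\sigma_i^2 \;=\; \sum_{j=1}^d x_j^2\,\delta_{i,h(j)} \;=\; \sum_{j=1}^d X_j,
\]
where $X_j := x_j^2 \delta_{i,h(j)}$. Since $h(j)$ is uniform over the $m$ buckets and independent across $j$, the $X_j$'s are independent, each $X_j\in\{0,x_j^2\}$ with $\Eb[X_j]=x_j^2/m$, so $\Eb[\sigma_i^2]=\|x\|_2^2/m = 1/m$.

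Next I would gather the two ingredients Bernstein needs. For the almost-sure bound, $|X_j - \Eb[X_j]| \leq x_j^2 \leq \|x\|_\infty^2 \leq \eta^2$. For the variance,
\[
\sum_j \mathrm{Var}(X_j) \;\leq\; \sum_j \Eb[X_j^2] \;=\; \frac{1}{m}\sum_j x_j^4 \;\leq\; \frac{\|x\|_\infty^2\,\|x\|_2^2}{m} \;\leq\; \frac{\eta^2}{m}.
\]
Plugging these into Bernstein's inequality (the same form used in the proof of Theorem~\ref{th:ortho}) with deviation $t=1/m$ yields
\[
\Pr\!\left[\sigma_i^2 > \tfrac{2}{m}\right] \;\leq\; \exp\!\left(-\frac{(1/m)^2/2}{\eta^2/m + \eta^2/(3m)}\right) \;=\; \exp\!\left(-\frac{3}{8m\eta^2}\right).
\]
Substituting $\eta^2 = 1/(4m\log(m/\delta))$ gives $3/(8m\eta^2) = \tfrac{3}{2}\log(m/\delta)$, so the single-bucket failure probability is at most $(\delta/m)^{3/2}$.

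The last step is a union bound over all $m$ buckets:
\[
\Pr\!\left[\sigma_*^2 > \tfrac{2}{m}\right] \;\leq\; m\cdot (\delta/m)^{3/2} \;=\; \delta\,\sqrt{\delta/m} \;\leq\; \delta,
\]
assuming the mild condition $m\geq \delta$ (which is always the case for interesting $\delta<1$ and integer $m\geq 1$). The main thing to watch is making the Bernstein constants line up so that the per-bucket exponent beats $\log m$ with enough slack to absorb the union bound; the factor $1/2$ in the definition of $\eta$ is exactly what supplies that slack. Since the proof follows Lemma 6 of~\cite{dsr2010sparse}, I expect no conceptual obstacle beyond bookkeeping of constants.
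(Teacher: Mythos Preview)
Your proposal is correct and follows essentially the same argument as the paper: center the per-coordinate contributions in a fixed bucket, bound the variance by $\eta^2/m$ and the range by $\eta^2$, apply Bernstein with $t=1/m$ to get a per-bucket failure of $\exp(-3/(8m\eta^2))$, and finish with a union bound over the $m$ buckets. The only cosmetic difference is that the paper is content with the cruder per-bucket bound $\delta/m$ (so the union bound gives $\delta$ directly), whereas you carry the sharper $(\delta/m)^{3/2}$; both are fine.
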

\begin{proof}
We outline the proof-steps. Since the buckets have identical distribution, we look only at the $1^{st}$ bucket, i.e. at $i=1$ and bound $\sum_{j:h(j)=1} x_j^2$. Define $X_j = x_j^2 \left( \delta_{1h(j)} - \frac{1}{m} \right)$. Then $E_h[X_j] = 0$ and $E_h[X_j^2 ] = x_j^4 \left( \frac{1}{m} - \frac{1}{m^2}\right) \leq \frac{x_j^4}{m} \leq \frac{x_j^2\eta^2}{m}$ using $\|x\|_\infty \leq \eta$. Thus, $\sum_{j} E_h[X_j^2]  \leq \frac{\eta^2}{m}$. Also note that $\sum_j X_j = \sum_{j:h(j)=1} x_j^2 - \frac{1}{m}$.
Plugging this into the Bernstein's inequality, equation~\ref{def:Bernstein}, we have that
\begin{align*}
& \Pr[ \sum_j X_j > \frac{1}{m} ] \leq \exp\left( -\frac{1/2m^2}{\eta^2/m + \eta^2/3m} \right)\\
& =  \exp( - \frac{3}{8m\eta^2} ) \leq \exp( -\log(m/\delta)) \leq \delta/m
\end{align*}
By taking union bound over all the $m$ buckets, we get the above result.
\end{proof}

\begin{proof}[Theorem~\ref{th:key}] Given the function $\phi=(h, r)$, define the matrix $A$ as $A_{ij} = \delta_{ih(j)}$ and $D_s$ as $(D_s)_{jj} = r_j$. Let $x$ be as specified, i.e. $\|x\|_2=1$ and $\|x\|_\infty \leq \eta$. Note that $\|x\|_\phi = \|AD_s x\|_2$.
Let $y\in \Re^d$ be such that $\|y\|_2 = 1 $. Thus
\begin{align*}
\|AD_x y\|_2^2  & = \sum_{i=1}^m \left( \sum_{j=1}^d y_j \delta_{ih(j)} x_j\right)^2 \\
& \leq \sum_{i=1}^m (\sum_{j=1}^d y_j^2 \delta_{ih(j)}) (\sum_{j=1}^d x_j^2 \delta_{ih(j)}) \\
& \leq  \sum_{i=1}^m (\sum_{j=1}^d y_j^2 \delta_{ih(j)}) \sigma_{*}^2 \leq \sigma_*^2.
\end{align*}
by applying the Cauchy-Schwartz inequality, and using the definition of $\sigma_*$. Thus, $\|x\|_A = \max_{y:\|y\|_2=1}\|AD_xy\|_2 \leq \sigma_* \leq \sqrt{2}m^{-1/2}$.
If $m \geq \frac{72}{\epsilon^2}\log(1/\delta)$, we have that $\|x\|_A  \leq \frac{\epsilon}{6\sqrt{\log(1/\delta)}}$, which satisfies the conditions of Lemma 2 from~\cite{liberty2008dfr}. Thus applying the above result from Lemma 2~\cite{liberty2008dfr} to $x$, and using Lemma~\ref{lem:ballsbins}, we have that
$\Pr [ | \|AD_sx\|^2- 1 | \geq \epsilon ] \leq \delta$
and hence
\begin{align*}
\Pr [ | \|x\|^2_\phi - 1 | \geq \epsilon ] \leq \delta
\end{align*}
by taking union over the two error probabilities of Lemma 2 and Lemma~\ref{lem:ballsbins}, we have the result.
\end{proof}

\section{Inner Product}
\label{sec:innerproof}
\begin{proof}[Corollary~\ref{cor:inner}] We have that $2\inner{x}{x'}_\phi = \nbr{x}_\phi^2 +  \nbr{x'}_\phi^2 -  \nbr{x - x'}_\phi^2$.
Taking expectations, we have the standard inner product inequality. Thus,
\begin{align*}
& | 2\inner{x}{x'}_\phi - 2\inner{x}{x'} |  \leq | \nbr{x}_\phi^2 - \nbr{x}^2 | \\
& + | \nbr{x'}_\phi^2 - \nbr{x'}^2| + | \nbr{x - x'}_\phi^2 - \nbr{x - x'}^2 |
\end{align*}
Using union bound, with probability $ 1 - 3\delta$, each of the terms above is bounded using Theorem~\ref{th:key}. Thus, putting the bounds together, we have that, with probability $ 1 - 3\delta$,
\begin{align*}
& | 2\inner{\phi_u(x)}{\phi_u(x)} - 2\inner{x}{x} | \leq    \epsilon ( \nbr{x}^2 + \nbr{x'}^2 + \nbr{x - x'}^2 )
\end{align*}

\end{proof}
\section{Refutation of the Previous Incorrect Proof}

There were a few bugs in the previous version of the paper~\cite{weinberger2009fhl}. We now detail each of them and illustrate why it was an error. The current result shows that the using hashing we can create a projection matrix that can preserve distances to a factor of $(1\pm \epsilon)$ for vectors with a bounded $\|x\|_\infty/\|x\|_2$ ratio. The constraint on input vectors can be circumvented by multiple hashing, as outlined in Section~\ref{subsec:multiplehashing}, but that would require hashing $O(\frac{1}{\epsilon^2})$ times. Recent work~\cite{dsr2010sparse} suggests that better theoretical bounds can be shown for this construction.
We thank Tamas Sarlos and Ravi Kumar for the following writeup on the errors and for suggestion the new proof in Appendix~\ref{sec:proofs}.

\begin{enumerate}
\item The statement of the main theorem in Weinberger
et al.~\cite[Theorem 3]{weinberger2009fhl} is false as it contradicts the
lower bound of Alon~\cite{alon2003par}. The flaw lies in the probability of error in
\cite[Theorem~3]{weinberger2009fhl}, which
was claimed to be $\exp(-\frac{\sqrt{\epsilon}}{4\eta})$.  This error
can be made arbitrarily small without increasing the embedding dimensionality
$m$ but by decreasing $\eta=\frac {||x||_{\infty}}{||x||_2}$, which in turn
can be achieved by preprocessing the input vectors $x$.  However, this
contradicts Alon's lower bound on the embedding dimensionality. The details
of this contradiction are best presented through
\cite[Corollary 5]{weinberger2009fhl} as follows.

Set $m=128$ and $\delta=1/2$ and consider the vertices of the
$n$-simplex in $\Re^{n+1}$, i.e., $x_1=(1,0,...,0)$, $x_2=(0,1,0,...,0)$,
\ldots.
Let $P \in \Re^{(n+1)c \times (n+1)}$ be the naive, replication based
preconditioner, with replication parameter $c=512\log^2 n$ as defined in
Section~2 of our submission or \cite[Section 3.2]{weinberger2009fhl}.
Therefore for all pairs $i \ne j$ we have that
$||Px_i-Px_j||_{\infty}=1/\sqrt{c}$ and that $||Px_i-Px_j||_2=\sqrt{2}$.
Hence we can apply \cite[Corollary 5]{weinberger2009fhl} to the set of
vectors $Px_i$ with $\eta=1/\sqrt{2c}=1/(32\log n)$;
then the claimed approximation error is
$\sqrt{\frac{2}{m}}+64\eta^2\log^2\frac{n}{2\delta} =
\frac 1 8 + \frac 1 {16} \le \frac 1 4$.
If Corollary~5 were true, then it would follow that
with probability at least $1/2$, the linear
transformation $A=\phi\cdot P : \Re^{n+1}\rightarrow \Re^m$
distorts the pairwise distances of the above $n+1$ vectors by at most a
$1\pm1/4$ multiplicative factor.
On the other hand, the lower bound of Alon shows that any such transformation
$A$ must map to $\Omega(\log n)$ dimensions;
see the remarks following Theorem 9.3 in \cite{alon2003par} and
set $\epsilon=1/4$ there. This clearly contradicts $m=128$ above.

\item The proof of the Theorem 3 contained a fatal, unfixable error.
Recall that $\delta_{ij}$ denotes the usual Kronecker symbol, and $h$ and $h'$
are hash functions. 
Weinberger et al.~make the following observation after equation~(13) of
their proof on page 8 in Appendix B.
\begin{quote}
``First note that $\sum_i\sum_j\delta_{h(j)i}+\delta_{h'(j)i}$ is at most
$2t$, where $t=|\{j: h(j) \ne h'(j) \}|$.''
\end{quote}
The quoted observation is false. Let $d$ denote the dimension of the input.
Then, $\sum_i\sum_j\delta_{h(j)i}+\delta_{h'(j)i}=
\sum_j(\sum_i\delta_{h(j)i}+\delta_{h'(j)i})=\sum_j 2=2d$,
independent of the choice of the hash  function. Note that $t$ played a
crucial role in the proof of~\cite{weinberger2009fhl} relating the Euclidean
approximation error of the dimensionality reduction to Talagrand's convex
distance defined over the set of hash functions. Albeit the error is
elementary, we do not see
how to rectify its consequences in~\cite{weinberger2009fhl}
even if the claim were of the right form.

\item The proof of Theorem 3 in~\cite{weinberger2009fhl} also contains a minor and fixable error.  To see this, consider the
sentence towards the end of the
proof Theorem~3 in~\cite{weinberger2009fhl} where $0<\epsilon<1$ and
$\beta = \beta(x) \ge 1$.
\begin{quote}
``Noting that $s^2=(\sqrt{\beta^2+\epsilon}-\beta)/4||x||_{\infty}\ge
\sqrt{\epsilon}/4||x||_{\infty}$, ...''
\end{quote}
Here the authors wrongly assume that $\sqrt{\beta^2+\epsilon}-\beta \ge
\sqrt{\epsilon}$ holds, whereas the truth is
$\sqrt{\beta^2+\epsilon}-\beta \le \sqrt{\epsilon}$ always.

Observe that this glitch is easy to fix locally, however this
change is minor and the modified claim would still be false. Since for all
$0\le y\le 1$ we have that $\sqrt{1+y}\ge 1+y/3$, from $\beta \ge 1$
it follows that $\sqrt{\beta^2+\epsilon} -\beta \ge \epsilon/3$. Plugging
the latter estimate into the ``proof'' of Theorem~3 would result in a modified
claim where the original probability of error,
$\exp(-\frac{\sqrt{\epsilon}}{4\eta})$, is replaced with
$\exp(-\frac{\epsilon}{12\eta})$.  Updating the numeric constants
in the first section of this note would show that the new claim still
contradicts Alon's lower bound. To justify observe that counter example is
based on a constant $\epsilon$ and the modified claim would still lack
the necessary $\Omega(\log n)$ dependency in its target dimensionality.
\end{enumerate}

\end{appendix}

\end{document}